
\documentclass[runningheads]{llncs}
\usepackage{graphicx}
\usepackage{comment}
\usepackage{amsmath,amssymb} 
\usepackage{color}


\usepackage{cite}
\usepackage[caption=false]{subfig}
\usepackage{multirow}
\usepackage{booktabs}

\usepackage{amsthm}
\newtheorem{prop}{Proposition}
\usepackage{adjustbox}
\usepackage{hyperref}
\hypersetup{
    colorlinks=true,
    linkcolor=magenta,
    filecolor=magenta,
    urlcolor=magenta,
}

\begin{document}
\pagestyle{headings}
\mainmatter
\def\ECCVSubNumber{3796}  

\title{Shape Prior Deformation for Categorical 6D Object Pose and Size Estimation} 

\titlerunning{Shape Prior Deformation for Categorical 6D Pose Estimation}
%
\author{Meng Tian\orcidID{0000-0001-9937-8975}, Marcelo H Ang Jr\orcidID{0000-0001-8277-6408}, \\ Gim Hee Lee\orcidID{0000-0002-1583-0475}}
%
\authorrunning{Meng Tian, Marcelo H Ang Jr, and Gim Hee Lee}
%
\institute{National University of Singapore, Singapore \\
\email{tianmeng@u.nus.edu, \{mpeangh, gimhee.lee\}@nus.edu.sg}}
\maketitle

\begin{abstract}

We present a novel learning approach to recover the 6D poses and sizes of unseen object instances from an RGB-D image.
To handle the intra-class shape variation, we propose a deep network to reconstruct the 3D object model by explicitly modeling the deformation from a pre-learned categorical shape prior.
Additionally, our network infers the dense correspondences between the depth observation of the object instance and the reconstructed 3D model to jointly estimate the 6D object pose and size.
We design an autoencoder that trains on a collection of object models and compute the mean latent embedding for each category to learn the categorical shape priors.
Extensive experiments on both synthetic and real-world datasets demonstrate that our approach significantly outperforms the state of the art.
Our code is available at \url{https://github.com/mentian/object-deformnet}.

\keywords{category-level pose estimation, 3D object detection, shape generation, scene understanding}

\end{abstract}

\section{Introduction}

Accurate 6D object pose estimation plays an important role in a variety of tasks, such as augmented reality, robotic manipulation, scene understanding, etc.
In recent years, substantial progress has been made for instance-level 6D object pose estimation, where the exact 3D object models for pose estimation are given.
Unfortunately, these methods \cite{peng2019pvnet, zakharov2019dpod, wang2019densefusion} cannot be directly generalized to category-level 6D object pose estimation on new object instances with unknown 3D models. Consequently, the category, 6D pose and size of the objects have to be concurrently estimated. 
Although some other object instances from each category are provided as priors, the high variation of object shapes within the same category
makes generalization to new object instances extremely challenging.

To the best of our knowledge, \cite{sahin2018category} is the first work to address the 6D object pose estimation problem at category-level.
This approach defines 6D pose on semantically selected centers and trains a part-based random forest to recover the pose.
However, building part representations upon 3D skeleton structures limits the generalization capability across unseen object instances.
Another work \cite{wang2019normalized} proposes the first data-driven solution and creates a benchmark dataset for this task.
They introduce the Normalized Object Coordinate Space (NOCS) to represent different object instances within a category in a unified manner.
A region-based network is trained to infer correspondences 
from object pixels to the points in NOCS.
Class label and instance mask of each object are also obtained at the same time.
These predictions are used together with the depth map to estimate the 6D pose and size of the object via point matching. However, the lack of explicit representation of shape variations limits their performance. 

In this work, we propose to reconstruct the complete object models in the NOCS to capture the intra-class shape variation. More specifically, we first learn the categorical shape priors from the given object instances, and then train a network to estimate the deformation field of the shape prior (that is used to get the reconstructed object model) and
the correspondences between object observation and the reconstructed model. 
The shape prior serves as prior knowledge of the category and encodes geometrical characteristics that are shared by objects of a given category.
The deformation predicted by our network captures the instance-specific shape details, i.e. shape variation of that particular instance.
We present a method which is applicable across different object categories and data representations to learn the shape priors.
In particular, an autoencoder is trained on a collection of object models from various categories.
For each category, we compute the mean latent embedding over all instances in the respective category.
The categorical shape prior is constructed by passing the mean embedding through a decoder.
Note that there is no restriction on the data representation (point cloud, mesh, or voxel) of shape priors or collected models as long as we choose a proper architecture for the encoder and decoder.

We use the Umeyama algorithm \cite{umeyama1991least} to recover the 6D pose and metric size of the object from the correspondences estimated by our network that maps 
the point cloud obtained from the observed depth map to the points in NOCS.
We evaluate our method on two standard benchmarks.
Extensive experiments demonstrate the advantage of our network and prove the effectiveness of explicitly modeling the deformation. In summary, the main contributions of this work are:
\begin{itemize}
	\item We propose a novel deep network for category-level 6D object pose and size estimation; our network explicitly models the deformation from the categorical shape prior to the object model.
	\item We present a learning-based method which utilizes the latent embeddings to construct the shape prior; our method is applicable across different categories and data representations.
	\item Our network achieves significantly higher mean average precisions on both synthetic and real-world benchmark datasets.
\end{itemize}

\section{Related Work}

\noindent\textbf{Instance-Level Pose Estimation.}
Existing instance-level pose estimation approaches broadly fall into three categories.
The first category casts votes in the pose space and further refines coarse pose with algorithms such as iterative closest point.
LINEMOD \cite{hinterstoisser2012model} uses holistic template matching to find the nearest viewpoint.
\cite{sundermeyer2018implicit} generates a latent code for the input image and search for its nearest neighbor in the codebook.
\cite{tejani2014latent, kehl2016deep} aggregate the 6D votes cast by locally-sampled RGB-D patches.
The second category directly maps input image to object pose.
\cite{kehl2017ssd, xiang2017posecnn} extend 2D object detection network such that it can predict orientation as an add-on to the identity and 2D bounding box of the object.
\cite{li2018unified, wang2019densefusion} regress 6D pose from RGB-D images in an end-to-end framework.
The third category relies on establishing point correspondences.
\cite{brachmann2014learning, krull2015learning, michel2017global} regress the corresponding 3D object coordinate for each foreground pixel.
\cite{rad2017bb8, tekin2018real, peng2019pvnet} detect the keypoints of the object on image and then solve a Perspective-n-Point problem.
\cite{zakharov2019dpod} estimates a dense 2D-3D correspondence map between the input image and object model.
Although our approach follows the approach from the third category, our task focuses on a more general setting where the object models are not available during inference.

\noindent\textbf{Category-Level Object Detection.}
The task of 3D object detection aims to estimate 3D bounding boxes of objects in the scene.
\cite{song2016deep} runs sliding windows in 3D space and generates amodal proposals for objects.
\cite{gupta2015aligning, lahoud20172d, qi2018frustum} first generate 2D object proposals and then lift the proposals to 3D space.
\cite{yang2018pixor, zhou2018voxelnet} are single-stage detectors which directly detect objects from 3D data.
Although the above mentioned methods address the problem at category-level, the considered objects are usually constrained to the ground surface, e.g. instances of typical furniture classes in indoor scenes, cars, pedestrians, and cyclists in outdoor scenes.
Consequently, the assumption that rotation is constrained to be only along the gravity direction has to be made.
On the contrary, our approach can recover the full 6D pose of objects.

\noindent\textbf{Category-Level Pose Estimation.}
There are only a few pioneering works focusing on estimating 6D pose of unseen objects.
\cite{burchfiel2019probabilistic} leverages a generative representation of 3D objects and produces a multimodal distribution over poses with mixture density network.
However, only rotation is considered in their work.
\cite{sahin2018category} introduces a part-based random forest which employs simple depth comparison features, but our approach deals with RGB-D images.
\cite{wang2019normalized} proposes a canonical representation for all instances within an object category.
Our approach also makes use of this representation.
Instead of directly regressing the coordinates in NOCS, we account for intra-class shape variation by explicitly modeling the deformation from shape prior to object model.
\cite{chen2020learning} trains a variational autoencoder to generate the complete object model.
However, the reconstructed shape is not utilized for pose estimation.
In our network, shape reconstruction and pose estimation are integrated together.
\cite{wang20196} proposes the first category-level pose tracker, while our approach performs pose estimation without using temporal information.

\noindent\textbf{Shape Deformation.}
3D deformation is commonly applied for object reconstruction from a single image.
\cite{yumer2016learning, pontes2018image2mesh, kurenkov2018deformnet} use free-form deformation in conjunction with voxel, mesh and point cloud representations, respectively.
\cite{wang2018pixel2mesh, wen2019pixel2mesh++} 
starts from a coarse shape and
predicts a series of deformations to progressively improve the geometry.
Similar to us, \cite{wang20193dn} also supervise the deformation with global feature of the target.
However, we circumvent the fixed topology assumption of mesh representation by using point clouds instead.

\section{Our Method}

\noindent\textbf{Background.} Given an RGB-D image as the input, our goal is to detect and localize all visible object instances in the 3D space.
The object instances are not seen previously, but must come from known categories.
Each object is represented by a class label and an amodal 3D bounding box parameterized by its 6D pose and size.
The 6D pose is defined to be the rigid transformation (i.e. rotation and translation) that transforms the object from the reference to the camera coordinate frame.
It is common to choose the coordinate frame of the given 3D object models as reference
in instance-level 6D object pose estimation. Unfortunately, this is not viable for our category-level task since the instances of the 3D models are not available.
To mitigate this problem, we leverage on the Normalized Object Coordinate Space (NOCS) -- a shared canonical representation for all possible object instances within a category proposed in \cite{wang2019normalized}. 
The categorical 6D object pose and size estimation problem is then reduced to finding the similarity transformation 
between the observed depth map of each object instance and its corresponding points in the NOCS (i.e. \textit{NOCS coordinates}).

\begin{figure}[t]
    \centering
    \includegraphics[width=\columnwidth]{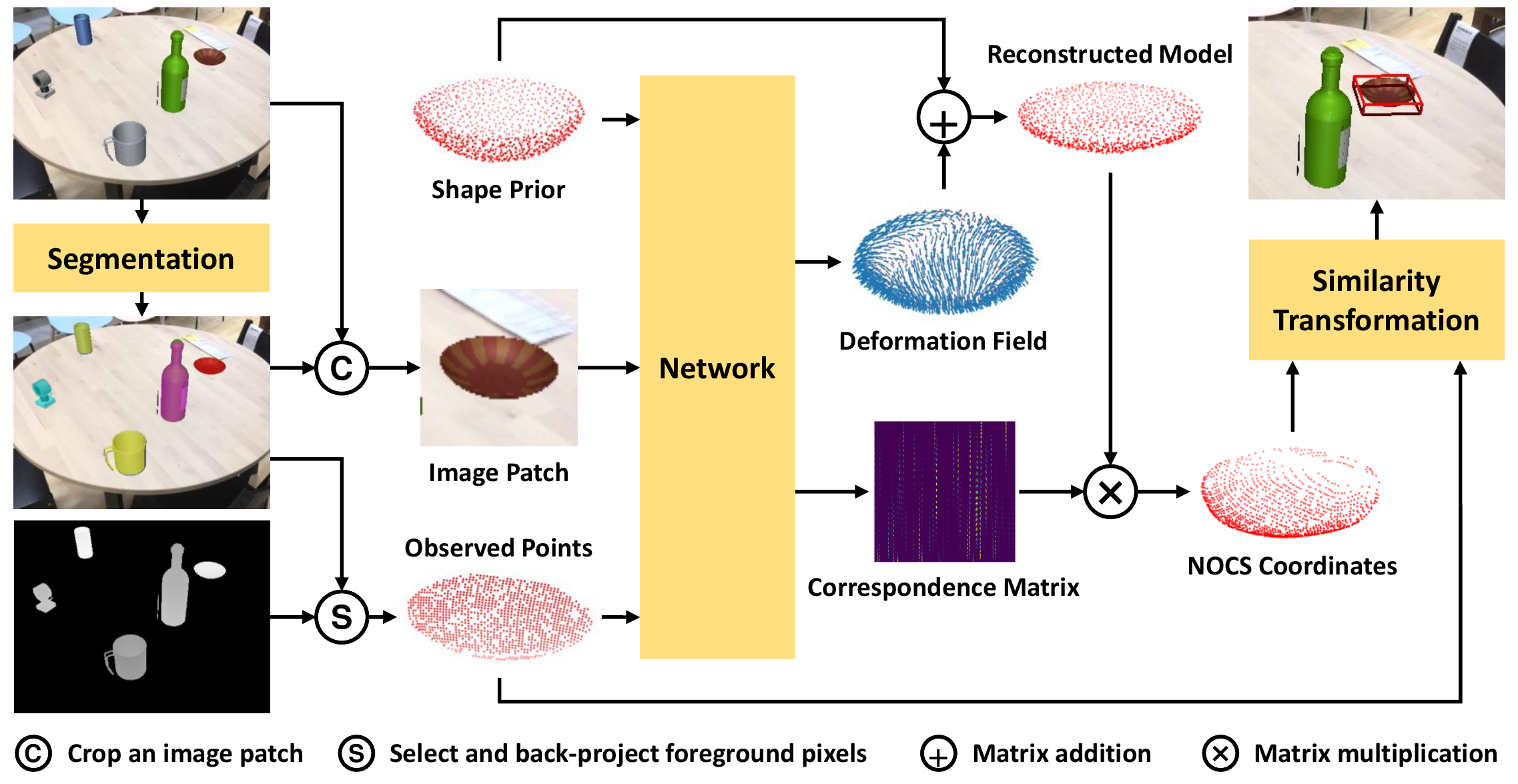}
    \caption{
    \textbf{Overview of our approach.}
    We first obtain a foreground mask for each object instance.
    Next our network reconstructs the instance (\textit{bowl} as an example) and establishes the correspondences between the observed points and the reconstructed model.
    Finally, the 6D pose is recovered by estimating a similarity transformation.
    Refer to Fig. \ref{fig:network} for the details of our network.
    }
    \label{fig:overview}
\end{figure}

\noindent\textbf{Overview.} 
In contrast to \cite{wang2019normalized} that directly outputs the NOCS coordinates from a Convolutional Neural Network (CNN), we propose an intermediate step to estimate the deformation of a pre-learned shape prior to improve the learning of intra-class shape variation. Our shape priors are learned from a collection of models spanning all categories (Section \ref{sect:prior}).
As shown in Fig.~\ref{fig:overview}, our approach consists of three stages.
The first stage performs instance segmentation on color image using an off-the-shelf network (e.g. Mask R-CNN \cite{he2017mask}).
Next we convert the masked depth map into a point cloud with camera intrinsic parameters for each instance and crop an image patch according to the bounding box of the mask.
Taking the point cloud, image patch, and the corresponding shape prior as inputs, our network outputs a deformation field that deforms the shape prior into the shape of the desired object instance (a.k.a. reconstructed model).
Furthermore, our network outputs a set of correspondences that associates 
each point in the point cloud obtained from the observed depth map of the object instance with the points of the reconstructed model.
This set of correspondences is used to mask the reconstructed model into the NOCS coordinates
(Section \ref{sect:network}).
Finally, the 6D pose and size of the object can be estimated by registering the NOCS coordinates and the point cloud obtained from the observed depth map 
(Section \ref{sect:registration}).

\subsection{Categorical Shape Prior}
\label{sect:prior}
Although object shape varies among different instances, an investigation over the 3D models reveals that objects of the same category (especially artificially generated objects) tend to have semantically and geometrically similar components.
For example, cameras are usually made up of a nearly cuboid body and a cylindrical lens; and mugs are typically cylindrical with a handle.
These categorical characteristics provide strong priors on the shape reconstruction of novel instances.
We propose the learning of a mean shape to capture the high-level characteristics from all the available models for each respective category.
To this end, we first train an autoencoder with all available object models and then compute the mean latent embedding of each object category with the encoder. These latent embeddings are passed into the decoder to get the mean shape priors for each object category.  
Unlike methods such as simple averaging \cite{wallace2019few} and principal component analysis (PCA) \cite{burchfiel2019probabilistic} that operate on voxel representations, our autoencoder framework can be easily altered to take any 3D representations. 

\begin{figure}[t]
    \centering
    \includegraphics[width=\columnwidth]{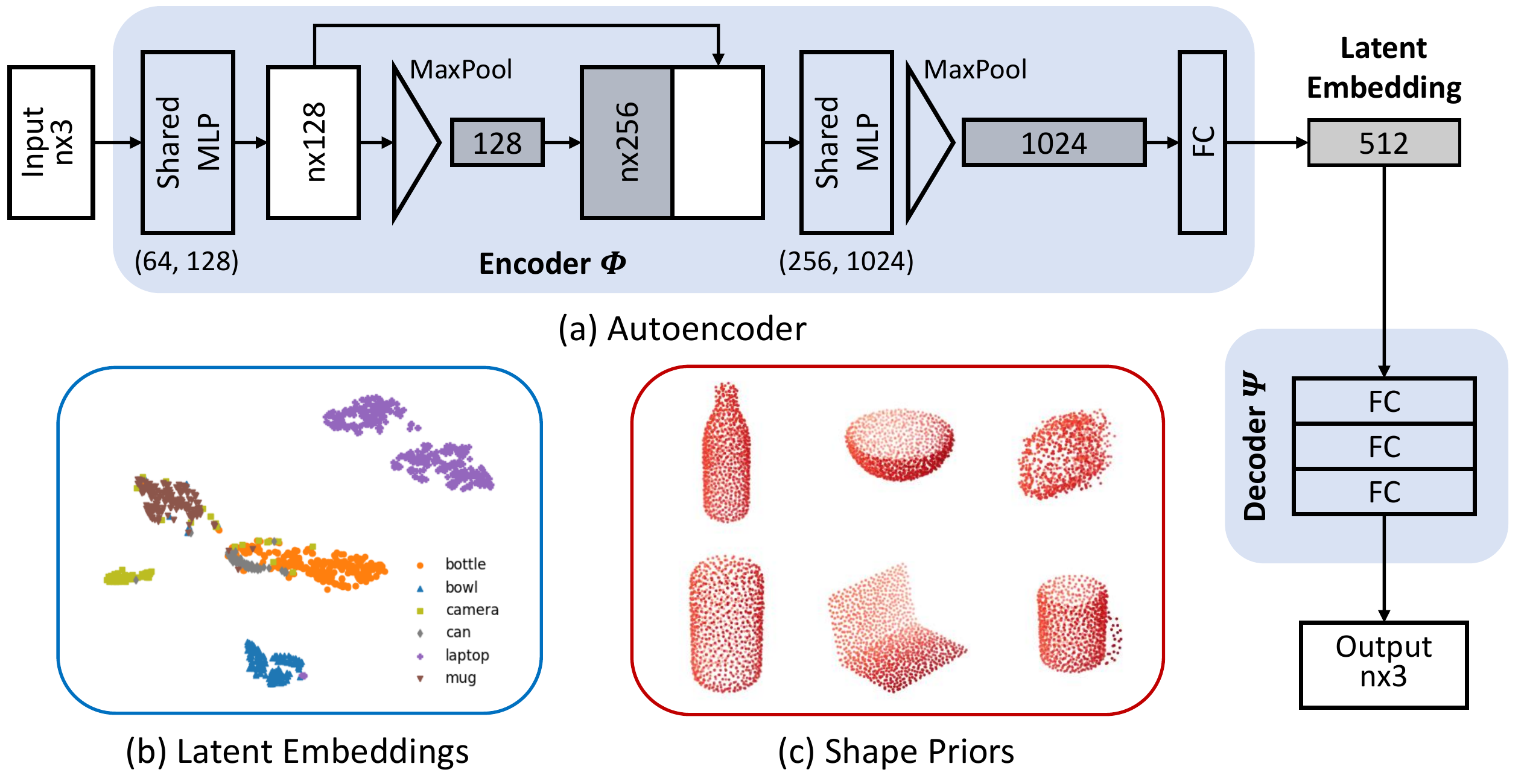}
    \caption{
    (a) Architecture of the autoencoder. (b) The latent embeddings of all instances are mapped to $\mathcal{R}^2$ with T-SNE for visualization. These instances are from  6 categories - \textit{bottle}, \textit{bowl}, \textit{camera}, \textit{can}, \textit{laptop} and \textit{mug}. (c) Shape priors are reconstructed by passing mean latent embedding of each category through the decoder.}
    \label{fig:shape_priors}
\end{figure}

Given a shape collection $\mathcal{M} = \{M_c^i \, \mid \, i=1, 2, \cdots, N; \, c = 1, 2, \cdots, C\}$, where  $M_c^i$ is the 3D point cloud model of instance $i$ from category $c$, we independently apply a similarity transformation to each model such that it is properly aligned in the NOCS.
This step ensures that the learned shape prior has the same scale and orientation as the target shape to be reconstructed.
The encoder $\Phi$ takes the point cloud and outputs a low-dimensional feature vector, i.e. the latent embedding $z_c^i \in \mathcal{R}^n$.
The decoder $\Psi$ takes this feature vector and outputs a point cloud that reconstructs the input:
\begin{equation}
    \hat{M_c^i} = (\Psi \circ \Phi)(M_c^i) = \Psi (z_c^i).
\end{equation}
Specifically, we adopt the PointNet-like encoder proposed in \cite{yuan2018pcn}, and a three-layer fully-connected decoder as shown in Fig. \ref{fig:shape_priors}a.
The reconstruction error is measured by the Chamfer distance:
\begin{equation}
\label{eq:cd_loss}
    d_{\text{CD}}(M_c^i, \hat{M_c^i}) = \sum_{x \in M_c^i} \min_{y \in \hat{M_c^i}} \| x - y \|_2^2 + \sum_{y \in \hat{M_c^i}} \min_{x \in M_c^i} \| x - y \|_2^2 .
\end{equation}

The autoencoder is trained on a shape collection by minimizing the reconstruction error.
Once the training is converged, we obtain the latent embeddings $\{ z_c^i \}$ of all instances in $\mathcal{M}$.
Although not explicitly enforced during training, these latent embeddings form clusters in the latent space according to their categories.
Fig. \ref{fig:shape_priors}b visualizes the clustering effect of the embeddings.
We use T-SNE \cite{maaten2008visualizing} to further embed these features in $\mathcal{R}^2$ for visualization.
Similar clustering results are also observed on a different set of models \cite{yang2018foldingnet}.
Based on this observation, we compute the mean latent embedding for each category and then pass it through the decoder to construct the shape prior:
\begin{equation}
    M_c = \Psi (z_c) = \Psi \left( \frac{1}{N_c} \sum_{i} z_c^i \right).
\end{equation}
The resulting categorical shape priors $\{ M_c \}$ are shown in Fig. \ref{fig:shape_priors}c.

\subsection{Our Network Architecture}
\label{sect:network}
We denote the observation of an object instance as $(V, I)$, where $V \in \mathcal{R}^{N_v \times 3}$ is the point cloud and $I \in \mathcal{R}^{H \times W \times 3}$ is the image patch.
$N_v$ denotes the number of foreground pixels with a valid depth value.
The corresponding shape prior is $M_c \in \mathcal{R}^{N_c \times 3}$, where $N_c$ is the number of points in $M_c$.
Our network takes $V$, $I$ and $M_c$ as inputs, and outputs the per-point deformation field $D \in \mathcal{R}^{N_c \times 3}$ and a correspondence matrix $A \in \mathcal{R}^{N_v \times N_c}$.
The final reconstructed model is $M = M_c + D$.
Each row of $A$ sums to 1 since it represents the soft correspondences between a point in $V$ and all points in $M$.
As shown in Fig. \ref{fig:network}, our network is composed of four parts: (1) extracts features from the object instance; (2) extracts features from the shape prior; (3) regresses the deformation field $D$; and (4) estimates the correspondence matrix $A$.

\begin{figure}[t]
    \centering
    \includegraphics[width=\columnwidth]{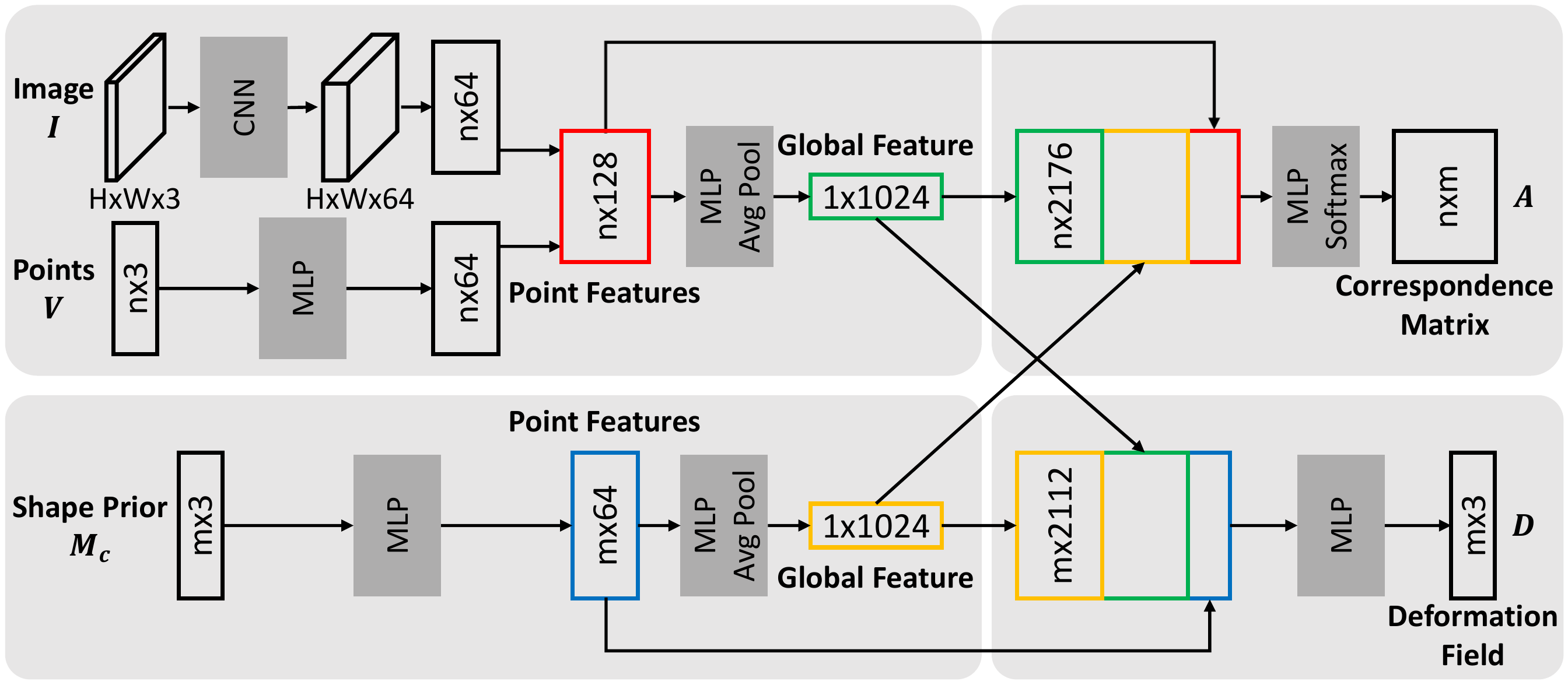}
    \caption{
    \textbf{Our Network Architecture.}
    The upper-left and lower-left branches extract point and global features from the instance and the shape prior respectively.
    The upper-right branch estimates the correspondence matrix, and the lower-right branch predicts the deformation field.
    The exchange of global features is the key part of our network.
    }
    \label{fig:network}
\end{figure}

On the consideration that the depth and color are two different modalities, we follow the pixel-wise dense fusion approach proposed in \cite{wang2019densefusion} to effectively extract RGB-D features from the observation.
For point cloud $V$, we use an embedding network similar to PointNet \cite{qi2017pointnet} to generate per-point geometric features by mapping each point in $V$ to the $d_g$-dimensional feature space.
The image patch $I$ is processed with a fully convolutional network which follows an encoder-decoder architecture and maps $I$ to $\mathcal{R}^{H \times W \times d_c}$.
Next we associate the geometric feature of each point with its corresponding color feature and concatenate the feature pairs.
Since each point in $V$ has a corresponding pixel in $I$, not vice versa, redundant color features are discarded.
The concatenated features are termed ``instance point features" and fed to another shared multi-layer perceptron.
An average pooling layer is used to generate the ``instance global feature".
The categorical shape prior $M_c$ is a point cloud with purely geometric information.
We apply a simpler embedding network to extract the ``category point features" and the ``category global feature".

The shape prior $M_c$ provides the prior knowledge of the category, i.e. the coarse shape geometry and canonical pose.
Although the observation $(V, I)$ is partial, it provides instance-specific details of the target shape.
A natural way to reconstruct the object in NOCS is to deform $M_c$ under the guidance of $(V, I)$.
Consequently, we concatenate the category and instance global features, and enrich the category point features with the concatenated features.
The obtained feature vectors are successively convolved with  $1 \times 1$ kernels to generate the deformation field $D$.
Similar intuition and feature concatenation strategy also apply to the estimation of $A$.
We combine the instance point features and global feature to aggregate both local and global information for each point.
Each point in $V$ gets mapped to the points of the reconstructed model
through concatenation with the category global feature.
We obtain the NOCS coordinates, denoted as $P$, of the points in $V$
by multiplying $A$ and $M$, i.e.
\begin{equation}
\label{eq:noc}
    P = A \times M = A (M_c + D) \in \mathcal{R}^{N_v \times 3}.
\end{equation}

\subsection{6D Pose Estimation}
\label{sect:registration}
Our goal is to estimate the 6D pose and size of the object instance.
Given depth observation $V$ and its NOCS coordinates $P$, the optimal similarity transformation parameters (rotation, translation, and scaling) can be computed by solving the absolute orientation problem using Umeyama algorithm \cite{umeyama1991least}.
We also implement the RANSAC algorithm \cite{fischler1981random} for robust estimation.

\subsection{Loss Functions}
\label{sect:loss}
In this section, we define the loss functions used to train our network, and explain how we handle object symmetry during training.

\noindent\textbf{Reconstruction Loss.}
We assume that ground-truth model $M_{gt}$ is available during training.
The deformation field $D$ is supervised indirectly by minimizing the Chamfer distance (c.f. Eq. \ref{eq:cd_loss}) between $M$ and $M_{gt}$, i.e. $ L_{\text{cd}} = d_{\text{CD}}(M, M_{gt}) = d_{\text{CD}}(M_c + D, M_{gt}) $.

\noindent\textbf{Correspondence Loss.}
It is impractical and unnecessary to pre-compute the ground-truth value for $A$.
Alternatively, we supervise $A$ indirectly via the NOCS coordinates $P$ (which is a result of applying the correspondence matrix on the reconstructed model) since the ground-truth NOCS coordinates $P_{gt}$ can be obtained easily from the object model and its 6D pose through image rendering.
We use the smooth $L_1$ loss function:
\begin{equation}
    L_{\text{corr}} (P, P_{gt}) = \frac{1}{N_v} \sum_{\mathbf{x} \in P} \sum_{i=1,2,3}
        \begin{cases}
        5(x_i - y_i)^2 , & \text{if} \; |x_i - y_i| \leq 0.1 \\
        |x_i - y_i| -0.05 , & \text{otherwise}
    \end{cases},
\end{equation}
where $\mathbf{x} = (x_1, x_2, x_3) \in P$, and $\mathbf{y} = (y_1, y_2, y_3) \in P_{gt}$.

Object symmetry is an inevitable problem for pose estimation algorithms, especially for those that require supervised training.
For symmetrical objects, there exists at least one rotation such that appearance of the object is preserved under this rotation.
In other words, two observations of a symmetric object can be very similar but with different rotation labels.
We follow the solution proposed by \cite{pitteri2019object} to map ambiguous rotations to a canonical one.
More specifically, the Map operator for an arbitrary rotation $R \in SO(3)$ is defined as:
\begin{equation}
    \text{Map}(R) = R \hat{S}, \: \text{with} \; \hat{S} = \underset{S \in \mathcal{S}(M_c^i)}{\arg\min} {\| R S - I_3 \|_F},
\end{equation}
where the proper symmetry group $ \mathcal{S}(M_c^i) $ is the set of rotations which preserve the appearance of a given object $M_c^i$.
The experimental datasets assume continuous symmetry and the axis of symmetry is the y-axis of the NOCS.
Hence, $\hat{S}$ takes the following form:
\begin{equation}
    \hat{S} =
        \setlength{\arraycolsep}{5pt}
        \begin{bmatrix}
        \cos \hat{\theta} & 0 & - \sin \hat{\theta} \\
        0 & 1 & 0 \\
        \sin \hat{\theta} & 0 & \cos \hat{\theta}
        \end{bmatrix}, \;
    \text{with} \; \hat{\theta} = \arctan 2 (R_{13} - R_{31}, R_{11} + R_{33}),
\end{equation}
where $R_{11}$, $R_{13}$, $R_{31}$, and $R_{33}$ are the elements of $R$.
During training, we apply the Map operator to the rotation label: $R_{gt} \leftarrow R_{gt} \hat{S}$ to eliminate the rotation ambiguity
of any symmetric object with the ground-truth pose $(R_{gt}, T_{gt})$. In practice, our network is supervised by ground-truth NOCS coordinates $P_{gt}$.
Equivalently, we transform $P_{gt}$ by $\hat{S}^T$: $P_{gt} \leftarrow \hat{S}^T P_{gt}$.

\noindent\textbf{Regularization Losses.}
Row $A_i$ of the matrix $A$ represents the distribution over the correspondences between $i$-th point of $V$ and the points in $M$.
$A_i$ can be understood as a relaxed one-hot vector, since each point of $V$ usually can be well approximated by at most three points of $M$.
We encourage $A_i$ to be a peaked distribution by minimizing the average cross entropy: $L_{\text{entropy}} = \frac{1}{N_v} \sum_{i}\sum_{j} - A_{i,j} \log A_{i,j}$.
We also regularize $D$ to discourage large deformations: $L_{\text{def}} = \frac{1}{N_C} \sum_{\mathbf{d}_i \in D} \| \mathbf{d}_i \| _2$.
Minimal deformation preserves the semantic consistency between shape prior and the reconstructed model.
For example, we want that the point belongs to the handle of the ``mug" prior remains in the handle after deformation.
This consistency loss is beneficial for the prediction of the correspondence matrix $A$.

In summary, the overall objective is a weighted sum of all four losses:
\begin{equation}
    L = \lambda_{1} L_{\text{cd}} + \lambda_{2} L_{\text{corr}} + \lambda_{3} L_{\text{entropy}} + \lambda_{4} L_{\text{def}} .
\end{equation}

\section{Experiments}

\subsection{Experimental Setup}

\noindent\textbf{Datasets.}
The CAMERA \cite{wang2019normalized} dataset is generated by rendering and compositing synthetic objects into real scenes in a context-aware manner.
In total, there are 300K composite images, where 25K are set aside for evaluation.
The training set contains 1085 object instances selected from 6 different categories - \textit{bottle}, \textit{bowl}, \textit{camera}, \textit{can}, \textit{laptop} and \textit{mug}.
The evaluation set contains 184 different instances.
The REAL \cite{wang2019normalized} dataset is complementary to the CAMERA.
It captures 4300 real-world images of 7 scenes for training, and 2750 real-world images of 6 scenes for evaluation.
Each set contains 18 real objects spanning the 6 categories.
The two evaluation sets are referred to as CAMERA25 and REAL275.

\noindent\textbf{Evaluation Metric.}
Following \cite{wang2019normalized}, we independently evaluate the performance of 3D object detection and 6D pose estimation.
We report the average precision at different Intersection-Over-Union (IoU) thresholds for object detection.
For 6D pose evaluation, the average precision is computed at $\text{n} ^\circ \, \text{m} cm$.
We ignore the rotational error around the axis of symmetry 
for symmetrical object categories (e.g. \textit{bottle}, \textit{bowl}, and \textit{can}).
Specially, we treat \textit{mug} as symmetric object in the absence of the handle, and asymmetric object otherwise.

\noindent\textbf{Baseline.}
Wang et al. \cite{wang2019normalized} is currently the only publicly available code and datasets for the 6D object pose and size estimation task. Futhermore, it is also the state-of-the-art performance on the task. 
Hence, we choose \cite{wang2019normalized} as our baseline for comparison.

\subsection{Implementation Details}
We collect all the instances in the CAMERA training dataset to train the autoencoder.
Shape priors are learned from this collection and used in all experiments.
Each prior consists of 1024 points.
We use the Mask R-CNN implemented by matterport \cite{matterport_maskrcnn_2017} for instance segmentation.
For each detected instance, we resize the image patch to $192 \times 192$, and randomly sample 1024 points by repetition (if insufficient point count) or downsampling (if sufficient point count).
To extract instance color features, we choose the PSPNet \cite{zhao2017pyramid} with ResNet-18 \cite{he2016deep} as backbone.
We randomly select 5 point-pairs to generate a hypothesis for the RANSAC-based pose fitting.
The maximum number of iteration is 128 and inlier threshold is set to 10\% of the object diameter.
For the hyperparameters of the total loss, we empirically find that $\lambda_{1} = 5.0$, $\lambda_{2} = 1.0$, $\lambda_{3} = 1e-4$, and $\lambda_{4} = 0.01$ are good choices.

\subsection{Comparison to Baseline}

We compare our approach to the Baseline \cite{wang2019normalized} on CAMERA25 and REAL275.
Quantitative results are summarized in Table \ref{table:map_comparison}.

\noindent\textbf{CAMERA25.}
In the setting of estimating 6D object pose and size from an RGB-D image, we achieve a mAP of 83.1\% for 3D IoU at 0.75, and a mAP of 54.3\% for 6D pose at $ 5^\circ \, 2 \text{cm} $.
Our results are 14\% and 22\% higher than the Baseline \cite{wang2019normalized}, respectively.
We naively remove the depth input and related sub-networks in our network (i.e. RGB image as the only input) to make fair comparisons with the Baseline \cite{wang2019normalized}, which takes an RGB image as its input.
As shown in Table~\ref{table:map_comparison}, our results without depth input are still significantly better than the Baseline \cite{wang2019normalized} (i.e. +15.5\% and +17.9\%).
On one hand, this experiment shows the advantage of explicit handling of the intra-class shape variation, and the effectiveness of our method which reconstructs the object via deformation.
On the other hand, it also shows that adding depth to the network does help to improve overall performance, although our improved performance does not rely on it solely.
Given that depth image is required to uniquely determine the scale of the object, we recommend it in practical applications.
The top row of Fig. \ref{fig:map} shows the average precision at different error thresholds for all 6 object categories.
It provides independent analysis for 3D IoU, rotation, and translation error.

\begin{table}[t]
\centering
\caption{
    Comparisons on CAMERA25 and REAL275.
    We report the mAP w.r.t. different thresholds on 3D IoU, and rotation and translation errors.
    }
\label{table:map_comparison}
\setlength{\tabcolsep}{5pt}
\begin{adjustbox}{width=\columnwidth}
\begin{tabular}{ c | c | c  c  c  c  c  c }
\toprule
\multirow{2}{*}{Data} & \multirow{2}{*}{Method} & \multicolumn{6}{c}{mAP} \\
\cline{3-8}
& & $3\text{D}_{50}$ & $3\text{D}_{75}$ & $5^\circ \, 2 \text{cm}$ & $5^\circ \, 5 \text{cm}$ & $10^\circ \, 2 \text{cm}$ & $10^\circ \, 5 \text{cm}$ \\
\midrule
\multirow{3}{*}{CAMERA25}  & {Baseline \cite{wang2019normalized}} &  83.9  &  69.5  &  32.3  &  40.9  &  48.2  &  64.6  \\
& {Ours (RGB)} &  93.1  &  \textbf{84.6}  &  50.2  &  54.5  &  70.4  &  78.6  \\
& {Ours (RGB-D)} &  \textbf{93.2}  &  83.1  &  \textbf{54.3}  &  \textbf{59.0}  &  \textbf{73.3}  &  \textbf{81.5}  \\
\midrule
\multirow{3}{*}{REAL275}  & {Baseline \cite{wang2019normalized}}  &  \textbf{78.0}  &  30.1  &  7.2  &  10.0  &  13.8  &  25.2  \\
& {Ours (RGB)} &  75.2  &  46.5  &  15.7  &  18.8  &  33.7  &  47.4  \\
& {Ours (RGB-D)} &  77.3  &  \textbf{53.2}  &  \textbf{19.3}  &  \textbf{21.4}  &  \textbf{43.2}  &  \textbf{54.1} \\
\bottomrule
\end{tabular}
\end{adjustbox}
\end{table}

\noindent\textbf{REAL275.}
The REAL training set only contains 3 object instances per category, we enlarge this training set such that the network can generalize well to unseen objects.
Following the Baseline \cite{wang2019normalized}, we randomly select data from CAMERA and REAL training set according to a ratio of $3:1$.
In fair comparison to the Baseline \cite{wang2019normalized}, our approach improves the mAP by 23.1\% for 3D IoU at 0.75 and 12.1\% for 6D pose at $ 5^\circ \, 2 \text{cm} $.
In strict comparison, we can still outperform by 16.4\% and 8.5\%, respectively.
These results provide  further evidence to support our approach.
Fig. \ref{fig:map} (bottom) shows more detailed analysis of the errors.

\begin{figure}[t]
    \centering
    \includegraphics[width=\columnwidth]{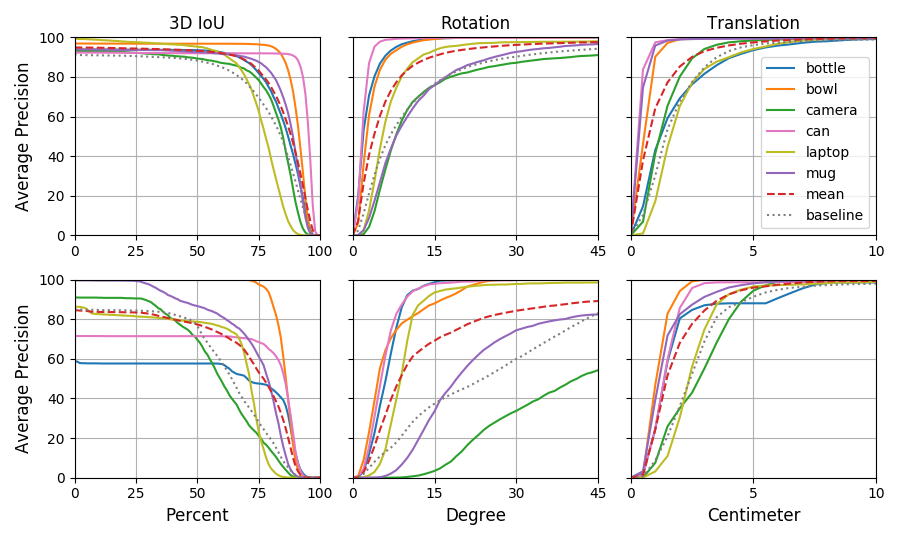}
    \caption{Average precision vs. error thresholds on CAMERA25 (top row) and REAL275 (bottom row).}
    \label{fig:map}
\end{figure}

\subsection{Evaluation of Shape Reconstruction}

\begin{table}[t]
\centering
\caption{Evaluation of shape reconstruction with CD metric ($\times 10^{-3}$).}
\label{table:shape_evaluation}
\setlength{\tabcolsep}{3pt}
\begin{adjustbox}{width=\columnwidth}
\begin{tabular}{ c | c | c  c  c  c  c  c  c}
\toprule
{Data}  &  {Model}  &  {Bottle}  &  {Bowl}
& {Camera} &  {Can}  &  {Laptop}  &  {Mug}  &  {Average}  \\
\midrule
\multirow{2}{*}{CAMERA25}  &  {Reconstruction}  &  1.81  &  1.63  &  4.02  &   0.97  &  1.98  &  1.42  &  1.97  \\
 &  {Shape Prior}  &  3.41  &  2.20  &  9.01  &  2.21  &  3.27  &  2.10  &  3.70  \\
\midrule
\multirow{2}{*}{REAL275}  &  {Reconstruction}  &  3.44  &  1.21  &  8.89  &  1.56  &  2.91  &  1.02  &  3.17  \\
 &  {Shape Prior}  &  4.99  &  1.16  &  9.85  &  2.38  &  7.14  &  0.97  &  4.41  \\
\bottomrule
\end{tabular}
\end{adjustbox}
\end{table}

To evaluate the quality of the reconstruction, we compute the CD metric (c.f. Eq. \ref{eq:cd_loss}) of the reconstructed model from our method with the ground truth model in the NOCS.
We get a CD metric of 1.97 on CAMERA25 and 3.17 on REAL275. In comparison, the CD metrics are 3.70 and 4.41 on the respective dataset for the shape priors from our autoencoder.
The better CD metrics of the reconstructed models compared to the shape priors show that the deformation estimation in our framework improves the quality of the 3D model reconstruction.
Table \ref{table:shape_evaluation} shows the CD metric of our reconstructed models and the shape priors for each category.

\subsection{Ablation Studies}

\noindent\textbf{Different shape priors.}
We first evaluate how different shape priors influence the performance.
All settings are kept the same in this experiment except for the choices of the priors.
Results are summarized in Table \ref{table:ablation_camera} and \ref{table:ablation_real}.
``Embedding" refers to the priors obtained from decoding the mean latent embeddings.
We also try the instance whose latent embedding has the minimum $L_2$ distance to the mean latent embedding (denoted as ``NN").
In addition, we explore random selection of one instance per category from the shape collection to compose our priors (denoted as ``Random").
In general, our approach remains stable under different priors.
Our network can adapt to different shape priors because the deformation is explicitly estimated.
We achieve the best result for accurate pose (i.e. $5^\circ \, 2 \text{cm}$) estimation when the learned categorical shape prior is used.
Since our main target is to recover the 6D pose, we choose ``Embedding" as our best model.
To validate whether the priors are necessary, we use a point cloud uniformly sampled from a sphere of diameter one as our prior (denoted as ``None").
The mAP decreases by 3.7\% on real dataset
when there are no priors, but the best result is achieved for object size estimation.
Although shape priors are beneficial for estimating 6D pose, they sometimes bias shape reconstruction.

\begin{table}[t]
\centering
\caption{Ablation studies on CAMERA25. Refer to text for more details.}
\label{table:ablation_camera}
\setlength{\tabcolsep}{5pt}
\begin{adjustbox}{width=\columnwidth}
\begin{tabular}{ c | c | c  c  c  c  c  c }
\toprule
\multirow{2}{*}{} & \multirow{2}{*}{Network} & \multicolumn{5}{c}{mAP} \\
\cline{3-8}
& & $3\text{D}_{50}$ & $3\text{D}_{75}$ & $5^\circ \, 2 \text{cm}$ & $5^\circ \, 5 \text{cm}$ & $10^\circ \, 2 \text{cm}$ & $10^\circ \, 5 \text{cm}$ \\
\midrule
\multirow{4}{*}{Shape Priors}  & {Embedding} &  93.2  &  83.1  &  \textbf{54.3}  &  \textbf{59.0}  &  73.3  &  81.5  \\
& {NN} &  93.3  &  85.7  &  52.7  &  57.3  &  72.9  &  81.3  \\
& {Random} &  93.3  &  85.7  &  53.4  &  58.0  &  72.8  &  81.0 \\
& {None} &  \textbf{93.3}  &  \textbf{85.8}  &  54.0  &  58.8  &  73.1  &  81.6  \\
\midrule
\multirow{1}{*}{NOCS Coords}  & {Regression}  &  93.3  &  85.3  &  51.2  &  55.6  &  \textbf{73.8}  &  \textbf{82.1}  \\
\midrule
\multirow{2}{*}{Regularization}  & {w/o Def.}  &  93.2  &  85.1  &  53.9  &  58.7  &  73.1  &  81.4  \\
& { w/o Entropy} &  93.2  &  85.1  &  53.2  &  57.9  &  73.2  &  81.8  \\
\bottomrule
\end{tabular}
\end{adjustbox}
\end{table}

\begin{table}[t]
\centering
\caption{Ablation studies on REAL275. Refer to text for more details.}
\label{table:ablation_real}
\setlength{\tabcolsep}{5pt}
\begin{adjustbox}{width=\columnwidth}
\begin{tabular}{ c | c | c  c  c  c  c  c }
\toprule
\multirow{2}{*}{} & \multirow{2}{*}{Network} & \multicolumn{5}{c}{mAP} \\
\cline{3-8}
& & $3\text{D}_{50}$ & $3\text{D}_{75}$ & $5^\circ \, 2 \text{cm}$ & $5^\circ \, 5 \text{cm}$ & $10^\circ \, 2 \text{cm}$ & $10^\circ \, 5 \text{cm}$ \\

\midrule
\multirow{4}{*}{Shape Priors}  & {Embedding} &  77.3  &  53.2  &  \textbf{19.3}  &  \textbf{21.4}  &  \textbf{43.2}  &  \textbf{54.1}   \\
& {NN} &  75.9  &  52.6  &  17.0  &  19.0  &  42.0  &  51.6  \\
& {Random} &  75.8  &  52.2  &  17.9  &  20.1  &  42.3  &  51.3 \\
& {None} &  77.2  &  \textbf{55.5}  &  15.6  &  19.8  &  38.4  &  53.6 \\
\midrule
\multirow{1}{*}{NOCS Coords}  & {Regression}  &  \textbf{78.7}  &  54.9  &  13.7  &  14.9  &  42.5  &  51.4  \\
\midrule
\multirow{2}{*}{Regularization}  & {w/o Def.}  &  77.1  &  50.2  &  13.4  &  15.4  &  37.3  &  49.8 \\
& {w/o Entropy} &  77.3  &  53.3  &  15.7  &  18.8  &  38.5  &  51.3 \\
\bottomrule
\end{tabular}
\end{adjustbox}
\end{table}

\noindent\textbf{Directly regress the NOCS Coordinates?}
As indicated by Eq. \ref{eq:noc}, our approach decouples the NOCS coordinates $P$ to shape reconstruction $M$ and dense correspondences $A$.
However, both the network architecture and the training will be much simpler when we follow \cite{wang2019normalized} to regress $P$ directly (denoted as ``Regression" in Table \ref{table:ablation_camera} and \ref{table:ablation_real}).
For 6D pose estimation, the mAP of ``Regression" at $5^\circ \, 2 \text{cm}$ is notably lower than ``Embedding" on CAMERA25 (-3.1\%) and REAL275 (-5.6\%).
This result further supports the benefit of handling shape variation via reconstruction over naive regression of the NOCS coordinates.
``Regression" achieves slightly better mAP for object size estimation since it only finds the NOCS coordinates for the observed part, while ``Embedding" needs to complete the unknown part of the object.

\noindent\textbf{Regularization losses.}
To validate the necessity of the two regularization losses, we train the network without regularizing deformation or correspondence, while keeping all the rest same as "Embedding".
The mean average precisions of both variants are still comparable to "Embedding" on synthetic dataset.
However, mAP of 6D pose at $5^\circ \, 2 \text{cm}$ drops noticeably (-5.9\% and -3.6\%)
on the more difficult real dataset.

\subsection{Qualitative Results.}
In Fig. \ref{fig:qualitative}, we provide several qualitative results from both synthetic and real instances.
The 6D pose and object size can be reliably recovered from noisy point correspondences using RANSAC-based pose fitting.
Shape reconstruction can capture the variations between instances.
The qualities of our predictions are generally better on synthetic data than real data, which is an indication that observation noise needs more attention in our future work.
Out of the six categories, \textit{camera} shows less accurate reconstruction due to its more complicated and varying geometry.

\begin{figure}[t]
    \centering
    \includegraphics[width=\columnwidth]{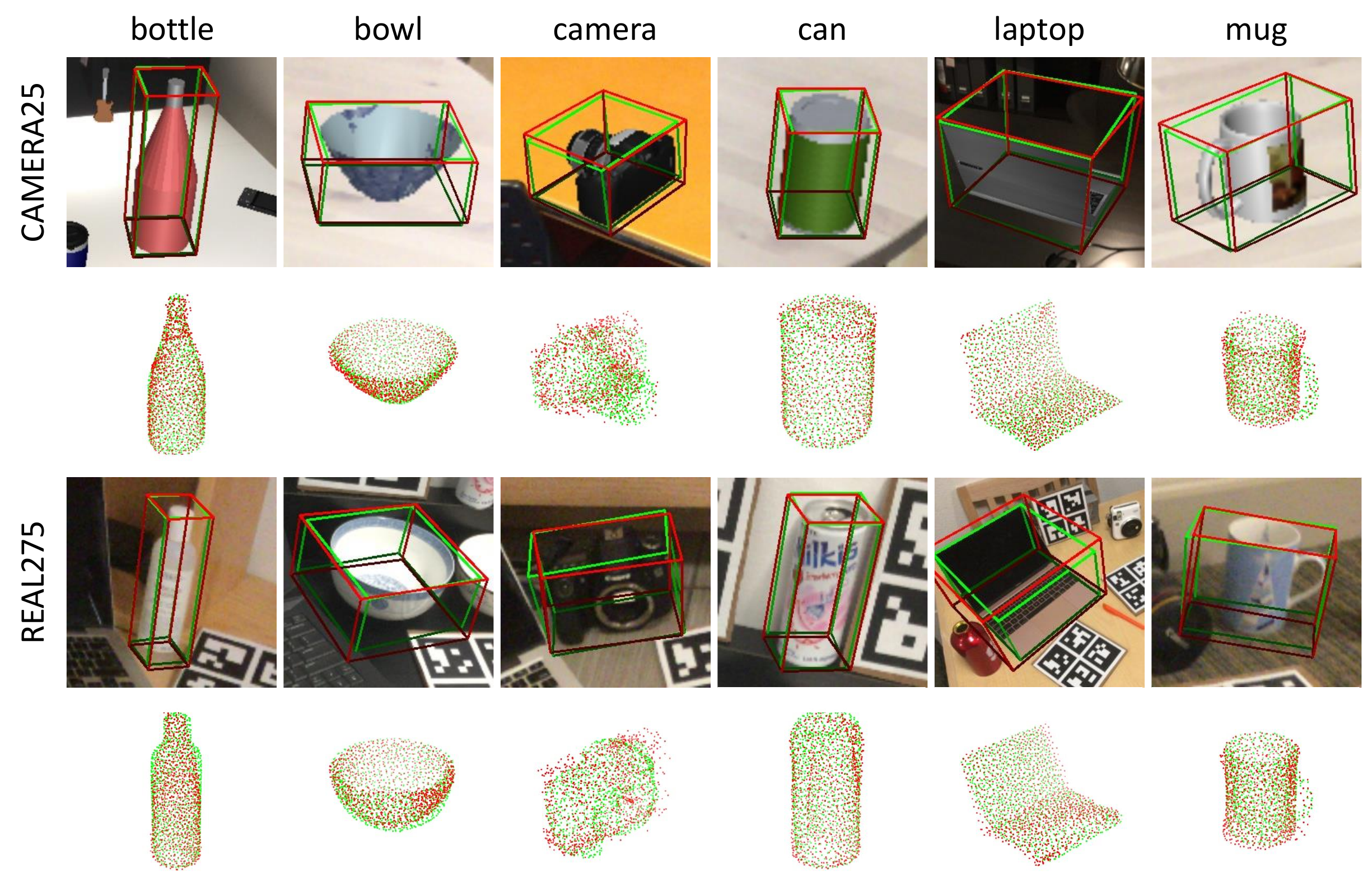}
    \caption{Examples of qualitative results from CAMERA25 (top rows) and REAL275 (bottom rows).
    For each example, we visualize the results of pose estimation and the reconstructed model $M$.
    Our estimations are shown in red, while the ground truths are shown in green.
    }
    \label{fig:qualitative}
\end{figure}

\section{Conclusions}
We present a novel approach for category-level 6D object pose estimation.
Our network 
explicitly models intra-class shape variation by the estimation of the deformation from a shape prior to the object model. Shape priors are learned form a collection of object models and constructed in the latent space.
Experiments on synthetic and real datasets demonstrate the advantage of our proposed approach.

\paragraph{\bf{Acknowledgements.}}
This research is supported in parts by the Singapore MOE Tier 1 grant
R-252-000-A65-114, and the Agency for Science, Technology and Research (A*STAR) under its AME Programmatic Funding Scheme (Project \#A18A2b0046).

\clearpage
%
%
\bibliographystyle{splncs04}
\bibliography{egbib}

\clearpage
\renewcommand\thesection{\Alph{section}}
\setcounter{section}{0}

\section{Comparison to CASS}
CASS \cite{chen2020learning} is the latest work on category-level 6D object pose and size estimation.
Similar to our work, they reconstruct the complete object model in the canonical space as a by-product.
However, they train a variational autoencoder to generate the point cloud, while we estimate the deformation field of the corresponding shape prior.
In addition, they directly regresses the pose and size by comparing pose-independent and pose-dependent features, while we recover the pose by establishing dense correspondences.
As shown in Table \ref{table:compare_cass}, our approach significantly outperforms CASS in pose accuracy. This demonstrates the superiority of our correspondence-based approach over direct pose regression.

\begin{table}[h]
\centering
\caption{Quantitative comparison with CASS on REAL275.}
\label{table:compare_cass}
\setlength{\tabcolsep}{4pt}
\begin{tabular}{ r | c  c  c  c  c  c  c }
\toprule
\multirow{2}{*}{\textbf{Method}} & \multicolumn{6}{c}{\textbf{mAP}} \\
\cline{2-8}
& $3\text{D}_{25}$ & $3\text{D}_{50}$ & $3\text{D}_{75}$ & $5^\circ \, 2 \text{cm}$ & $5^\circ \, 5 \text{cm}$ & $10^\circ \, 2 \text{cm}$ & $10^\circ \, 5 \text{cm}$ \\
\midrule
Baseline \cite{wang2019normalized}  &  \textbf{84.8}  &  \textbf{78.0}  &  30.1  &  7.2  &  10.0  &  13.8  &  25.2  \\
CASS \cite{chen2020learning} &  84.2 &  77.7  &  $-$  &  $-$  &  13.0  &  $-$  &  37.6  \\
Ours &  83.4  &  77.3  &  \textbf{53.2}  &  \textbf{19.3}  &  \textbf{21.4}  &  \textbf{43.2}  &  \textbf{54.1} \\
\bottomrule
\end{tabular}
\end{table}

\section{Comparison to 6-PACK}
6-PACK \cite{wang20196} is the state-of-the-art category-level 6D pose tracker.
Although our approach does not require pose initialization nor leverages on temporal consistency, we still achieve comparable accuracy on REAL275 at $5^\circ \, 5 \text{cm}$ (30.4\% compared to 33.3\%).
More importantly, the accuracy of 6-PACK drops below 30\% when the first 40 frames of a sequence (460 frames on average) are excluded from evaluation. This indicates that 6-PACK is highly dependent on pose initialization for higher accuracy. In contrast, the accuracy of our method remains stable since it is a pose estimation method.

\section{Qualitative Results}
Fig. \ref{fig:sup_qualitative} shows the per-frame pose detection results of our approach.
The results are better on synthetic data (top two rows) than on real data (bottom two rows).
This performance gap is mainly induced by the observation noise, which has greater influence on objects with complicated geometric shape (e.g. camera).

\begin{figure}[t]
    \centering
    \includegraphics[width=\columnwidth]{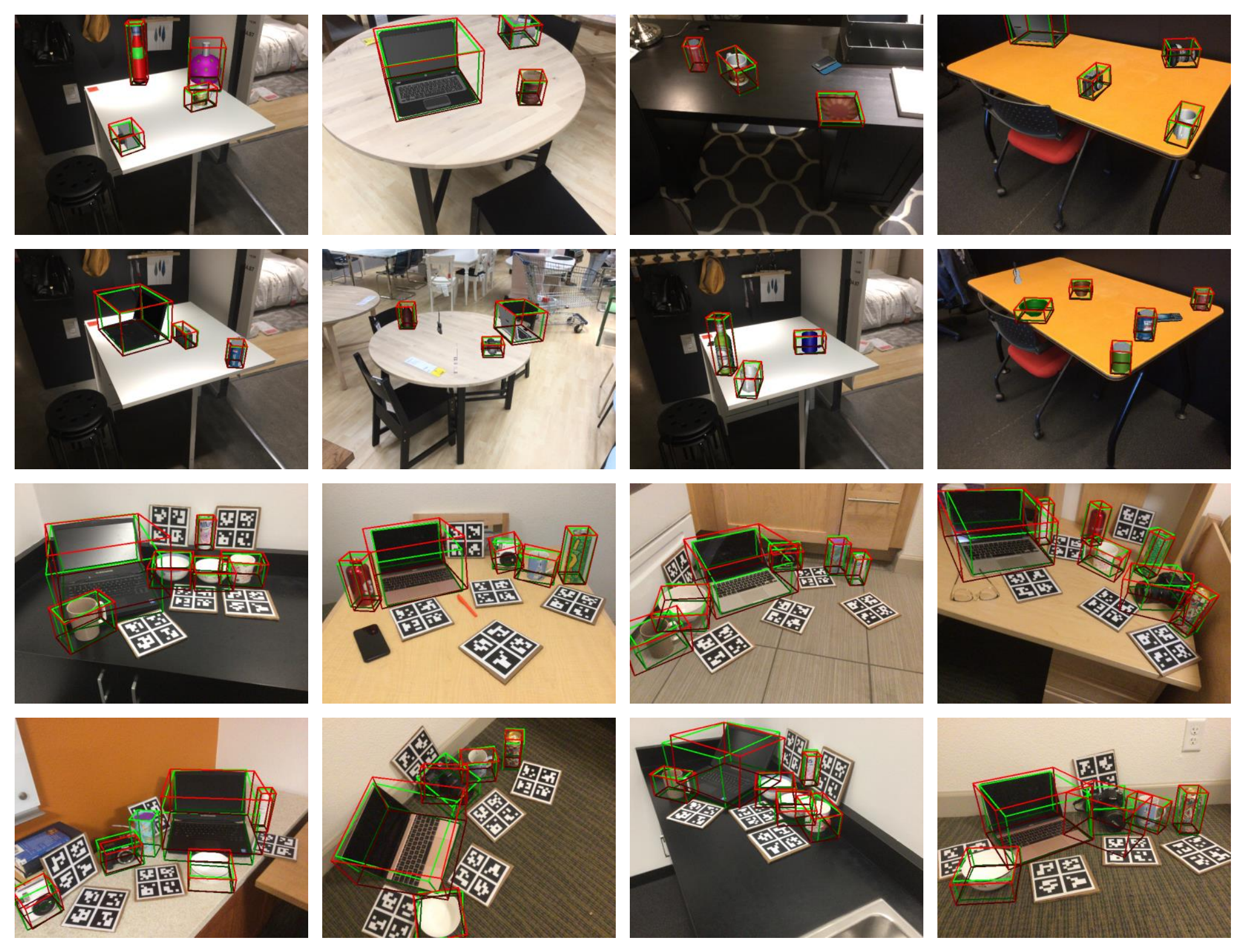}
    \caption{We show some qualitative results of our approach (red) and their ground truths (green) on CAMERA25 (top two rows) and REAL275 (bottom two rows).
    }
    \label{fig:sup_qualitative}
\end{figure}

\section{Runtime Analysis}
Given RGB-D images with resolution of $640 \times 480$ and mean object count of 4, our implementation approximately runs at 4 FPS on a desktop with an Intel Core i7-5960X CPU (3.0 GHz) and a NVIDIA GTX 1080Ti GPU.
Specifically, it takes an average time of 130 ms for instance segmentation, 100 ms for network inference, and 20 ms for pose alignment.

\section{Visualization of Shape Priors}
In Fig. \ref{fig:priors}, we visualize the different categorical shape priors used in our ablation studies.

\begin{figure}[t]
    \centering
    \includegraphics[width=\columnwidth]{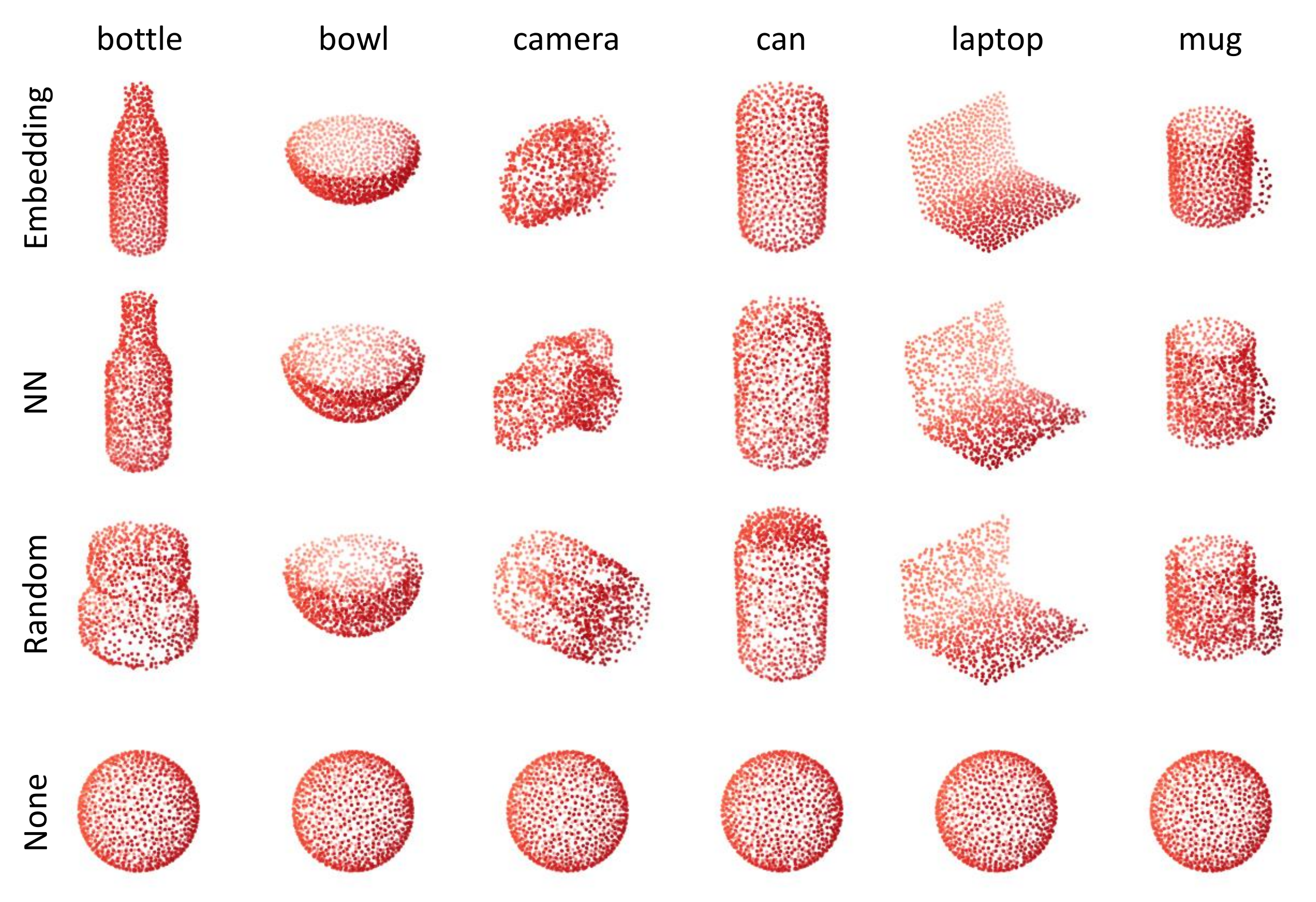}
    \caption{Categorical shape priors used in ablation studies.
    }
    \label{fig:priors}
\end{figure}

\section{Derivation of the Map Operator}

We first give the proposition from \cite{pitteri2019object}, and then derive the Map operator used in our work as a corollary. 
Given an object $M_c^i$, the proper symmetry group $\mathcal{S}(M_c^i)$ is defined as:
\begin{equation}
    \mathcal{S}(M_c^i) = \{ \mathbf{s} \in SO(3)\; \mid \; \forall \, \mathbf{p} \in SO(3), \; \mathcal{I}(M_c^i, \mathbf{p}) = \mathcal{I}(M_c^i, \mathbf{s} \cdot \mathbf{p}) \} ,
\end{equation}
where $\mathcal{I}(M_c^i, \mathbf{p})$ is the image of object $M_c^i$ under pose $\mathbf{p}$.
Intuitively, $\mathcal{S}(M_c^i)$ consists of rotations which preserve the appearance of a given object.

\begin{prop}\label{pro:symmetryGroup}
    Given a proper symmetry group $\mathcal{S}(M_c^i)$, $\forall \, R \in SO(3)$, the \textup{Map} operator is defined as:
    \begin{equation}
        \textup{Map} (R) = R \hat{S} , \; with \; \hat{S} = \underset{S \in \mathcal{S}(M_c^i)}{\arg\min} {\| R S - I_{3} \|_F} \; ,
    \end{equation}
    where $I_{3}$ is an $3\times3$ identity matrix. Then, $\textup{Map} (R_1) = \textup{Map} (R_2) \Longleftrightarrow \mathcal{I}(M_c^i, R_1) = \mathcal{I} (M_c^i, R_2) $ . 
\end{prop}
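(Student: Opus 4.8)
The plan is to reduce this two-sided pose equivalence to a single, convention-robust fact: two poses render to the same image exactly when they differ by an element of the proper symmetry group $\mathcal{S}(M_c^i)$, and the $\mathrm{Map}$ operator is nothing but a canonical choice of representative from each such equivalence class. First I would record the elementary but essential observation that $\mathcal{S}(M_c^i)$ is a subgroup of $SO(3)$: it contains $I_3$, and closure and inverses follow directly from the defining property $\mathcal{I}(M_c^i,\mathbf{p})=\mathcal{I}(M_c^i,\mathbf{s}\cdot\mathbf{p})$ by substituting suitable poses. With this in hand, the whole proposition factors through two lemmas, (L1) $\mathcal{I}(M_c^i,R_1)=\mathcal{I}(M_c^i,R_2)$ iff $R_1$ and $R_2$ lie in the same coset of $\mathcal{S}(M_c^i)$, and (L2) $\mathrm{Map}(R_1)=\mathrm{Map}(R_2)$ iff they lie in the same coset. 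Chaining the two equivalences immediately yields the claim.

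For (L1), one direction is free from the definition: if $R_1$ and $R_2$ differ by a symmetry element then the defining invariance of $\mathcal{S}(M_c^i)$ gives equal images. The converse --- equal images force a symmetry relation --- is the main obstacle, because the bare definition only controls images that already differ by an element of $\mathcal{S}(M_c^i)$. To close it I would invoke the homogeneity of the rendering map, i.e. that equality of $\mathcal{I}(M_c^i,\cdot)$ is preserved under composing both poses with a common rotation (equivalently, that $\mathcal{I}$ records the posed appearance of the model, so that image equality means the posed models coincide); this is exactly the property that upgrades the single equality $\mathcal{I}(M_c^i,R_1)=\mathcal{I}(M_c^i,R_2)$ to the universally quantified statement defining membership in $\mathcal{S}(M_c^i)$. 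I expect this step to be where all the real content lives, and I would state the homogeneity assumption explicitly rather than hide it.

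For (L2) I would work with the trace form of the objective. Since $RS\in SO(3)$,
\[
\|RS-I_3\|_F^2 = 6 - 2\,\mathrm{tr}(RS),
\]
so $\hat S(R)=\arg\min_{S\in\mathcal{S}(M_c^i)}\|RS-I_3\|_F$ is equivalently $\arg\max_{S}\mathrm{tr}(RS)$ (assumed unique, as the paper implicitly does; for the continuous $y$-axis group this is the stated $\arctan 2$ solution). Two facts then suffice. First, $\mathrm{Map}(R)=R\hat S(R)$ with $\hat S(R)\in\mathcal{S}(M_c^i)$, so $\mathrm{Map}(R)$ always lies in the same coset as $R$; consequently $\mathrm{Map}(R_1)=\mathrm{Map}(R_2)$ immediately forces $R_1$ and $R_2$ into a common coset, giving the harder direction of (L2) with no further work. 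Second, $\mathrm{Map}$ is constant on each coset: if $R_2=R_1 s$ with $s\in\mathcal{S}(M_c^i)$, then substituting $S\mapsto sS$ (a bijection of the group) in the maximization shows $\hat S(R_1 s)=s^{-1}\hat S(R_1)$, whence $\mathrm{Map}(R_1 s)=R_1 s\,s^{-1}\hat S(R_1)=\mathrm{Map}(R_1)$. These two facts give both directions of (L2).

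Finally I would assemble the chain: $\mathrm{Map}(R_1)=\mathrm{Map}(R_2)\Rightarrow$ common coset (coset preservation) $\Rightarrow$ equal images (L1); and conversely equal images $\Rightarrow$ common coset (L1) $\Rightarrow \mathrm{Map}(R_1)=\mathrm{Map}(R_2)$ (coset constancy). The only loose ends to tidy are the well-definedness of $\hat S$ (uniqueness of the minimizer, or a consistent tie-break) and making the coset side of (L1) match the $R\hat S$ convention of the $\mathrm{Map}$ operator; both are routine once the homogeneity property underlying (L1) is fixed, which remains the crux of the argument.
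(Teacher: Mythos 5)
The paper does not actually contain a proof of Proposition~\ref{pro:symmetryGroup}: it explicitly omits it and defers to \cite{pitteri2019object}, so there is no in-paper argument to compare yours against. On its own merits, your proof is correct in structure and is essentially the standard coset argument underlying the cited result: $\mathcal{S}(M_c^i)$ is a subgroup, $\mathrm{Map}(R)=R\hat S(R)$ always lands in the coset $R\,\mathcal{S}(M_c^i)$, and the change of variables $S\mapsto sS$ in the minimization shows $\mathrm{Map}$ is constant on each coset, which together give your (L2); chaining with (L1) yields the proposition. Your computation $\|RS-I_3\|_F^2=6-2\,\mathrm{tr}(RS)$ is also exactly the identity the paper uses in its proof of the corollary. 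The one place where your argument is not self-contained is the converse of (L1): from the definition given, membership of $R_2R_1^{-1}$ (or $R_1^{-1}R_2$, depending on the composition convention for $\mathbf{s}\cdot\mathbf{p}$) in $\mathcal{S}(M_c^i)$ is a universally quantified statement over all poses, and a single image equality $\mathcal{I}(M_c^i,R_1)=\mathcal{I}(M_c^i,R_2)$ does not imply it without an additional assumption on $\mathcal{I}$ (ruling out accidental view coincidences, or taking $\mathcal{I}$ to record the full posed model). You flag this explicitly rather than paper over it, which is the right call; to make the proof fully rigorous one would have to state that property as a hypothesis, as it is genuinely not derivable from the definitions as written. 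The remaining loose ends you mention (uniqueness or consistent tie-breaking for $\hat S$, and matching the left-coset convention $R\mathcal{S}(M_c^i)$ of the $\mathrm{Map}$ operator to the side on which symmetries act in the definition of $\mathcal{I}$) are real but routine.
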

The proof is omitted for brevity, refer to \cite{pitteri2019object} for the details. The Map operator used in our work can then be derived directly from Proposition \ref{pro:symmetryGroup}.

\begin{corollary}
The Map operator for symmetrical objects around the y-axis is given by:
\begin{equation}
        \textup{Map} (R) = R \hat{S},~~~\forall \,R \in SO(3),
\end{equation}
where        
\begin{equation}
    \hat{S} =
        \setlength{\arraycolsep}{5pt}
        \begin{bmatrix}
        \cos \hat{\theta} & 0 & - \sin \hat{\theta} \\
        0 & 1 & 0 \\
        \sin \hat{\theta} & 0 & \cos \hat{\theta}
        \end{bmatrix}, \;
    \text{with} \; \hat{\theta} = \arctan 2 (R_{13} - R_{31}, R_{11} + R_{33}).
\end{equation}
\end{corollary}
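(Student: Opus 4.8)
The plan is to specialize Proposition \ref{pro:symmetryGroup} to the continuous $y$-axis symmetry assumed by our datasets, so that the only remaining work is to evaluate in closed form the $\arg\min$ that defines $\hat S$. First I would make the proper symmetry group explicit: since any rotation about the $y$-axis leaves the appearance of such an object unchanged, Proposition \ref{pro:symmetryGroup} applies with
\begin{equation}
\mathcal{S}(M_c^i) = \left\{ S(\theta) = \begin{bmatrix} \cos\theta & 0 & -\sin\theta \\ 0 & 1 & 0 \\ \sin\theta & 0 & \cos\theta \end{bmatrix} \; : \; \theta \in [0, 2\pi) \right\}.
\end{equation}
This collapses the matrix-valued minimization $\hat S = \arg\min_{S \in \mathcal{S}(M_c^i)} \| R S - I_3 \|_F$ into a one-dimensional problem over the single scalar $\theta$.

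Next I would convert the Frobenius-norm objective into a trace. Expanding $\| R S(\theta) - I_3 \|_F^2 = \mathrm{tr}\big( (RS(\theta) - I_3)^\top (RS(\theta) - I_3) \big)$ and using $R^\top R = S(\theta)^\top S(\theta) = I_3$ (both matrices lie in $SO(3)$) gives $\| R S(\theta) - I_3 \|_F^2 = 6 - 2\,\mathrm{tr}(R S(\theta))$, so minimizing the norm is equivalent to maximizing $\mathrm{tr}(R S(\theta))$. I would then multiply out $R S(\theta)$ and sum its diagonal entries, which yields
\begin{equation}
\mathrm{tr}\big(R S(\theta)\big) = (R_{11} + R_{33})\cos\theta + (R_{13} - R_{31})\sin\theta + R_{22}.
\end{equation}
Writing $a = R_{11} + R_{33}$ and $b = R_{13} - R_{31}$, the objective is $a\cos\theta + b\sin\theta$ up to the additive constant $R_{22}$, and the amplitude–phase identity $a\cos\theta + b\sin\theta = \sqrt{a^2+b^2}\,\cos(\theta - \arctan 2(b,a))$ shows the maximum is attained at $\hat\theta = \arctan 2(b, a) = \arctan 2(R_{13} - R_{31}, R_{11} + R_{33})$. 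Substituting back gives $\hat S = S(\hat\theta)$ in the stated form, and $\mathrm{Map}(R) = R\hat S$ then follows directly from Proposition \ref{pro:symmetryGroup}.

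The derivation is essentially a single computation, so there is no deep obstacle; the only step demanding care is the bookkeeping of signs in the product $R S(\theta)$. Because the chosen parameterization places $-\sin\theta$ in the $(1,3)$ slot and $+\sin\theta$ in the $(3,1)$ slot of $S(\theta)$, the $\sin\theta$ coefficient in the trace must come out as $R_{13} - R_{31}$ rather than its negative. Getting this orientation right is precisely what fixes the two arguments of $\arctan 2$ in the correct order and matches the statement of the corollary.
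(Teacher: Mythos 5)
Your proposal is correct and follows essentially the same route as the paper's proof: parameterize $\mathcal{S}(M_c^i)$ by the single angle $\theta$, reduce $\| R S - I_3 \|_F^2$ to $6 - 2\,\mathrm{tr}(RS(\theta))$, and maximize the resulting $(R_{11}+R_{33})\cos\theta + (R_{13}-R_{31})\sin\theta$ to get $\hat\theta = \arctan 2(R_{13}-R_{31},\,R_{11}+R_{33})$. You are somewhat more explicit than the paper in justifying the trace identity and the amplitude--phase step, but the argument is the same.
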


\begin{proof}
Assuming the object $M_c^i$ is symmetrical around the y-axis of the object coordinate system, then $S$ has the following form:
\begin{equation}
    S =
        \setlength{\arraycolsep}{5pt}
        \begin{bmatrix}
        \cos \theta & 0 & - \sin \theta \\
        0 & 1 & 0 \\
        \sin \theta & 0 & \cos \theta
        \end{bmatrix} \; .
\end{equation}
The Froebenius norm can be rewritten as:
\begin{equation}
\begin{split}
     \| R S - I_{3} \|_F^2 & = 6 - 2 \text{Trace} (RS) \\
    & = 6 - 2 [ R_{11} \cos \theta + R_{13} \sin \theta + R_{22} + R_{33} \cos \theta - R_{31} \sin \theta ].
\end{split}
\end{equation}
We minimize the Froebenius norm over $\theta$ to solve for the Map:
\begin{equation}
\begin{split}
    \hat{\theta} & = \underset{\theta \in [0, 2 \pi )} {\arg\min} {\| R S - I_{3} \|_F} \\
    & = \underset{\theta \in [0, 2 \pi )} {\arg\max} {(R_{11} + R_{33}) \cos \theta + (R_{13} - R_{31}) \sin \theta} \\
    & = \arctan 2 (R_{13} - R_{31}, R_{11} + R_{33}) \; .
\end{split}
\end{equation}
Hence,
\begin{equation}
    \hat{S} =
        \setlength{\arraycolsep}{5pt}
        \begin{bmatrix}
        \cos \hat{\theta} & 0 & - \sin \hat{\theta} \\
        0 & 1 & 0 \\
        \sin \hat{\theta} & 0 & \cos \hat{\theta}
        \end{bmatrix}, \;
    \text{with} \; \hat{\theta} = \arctan 2 (R_{13} - R_{31}, R_{11} + R_{33}).
\end{equation}
\end{proof}

\end{document}